\documentclass[11pt]{article}
\pdfoutput=1

\usepackage{acl}

\usepackage{times}
\usepackage{latexsym}

\usepackage[T1]{fontenc}

\usepackage[utf8]{inputenc}

\usepackage{microtype}

\usepackage{inconsolata}
\usepackage{amsmath}   
\usepackage{amsthm}    

\usepackage{cleveref}  
\theoremstyle{plain}
\newtheorem{theorem}{Theorem}

\usepackage{graphicx}
\usepackage{subcaption}
\usepackage{float}
\usepackage{amsmath}
\usepackage{mathtools}   
\usepackage{multirow}
\usepackage{amsfonts}
\usepackage{adjustbox}
\usepackage{booktabs}
\usepackage{pifont}
\usepackage{enumitem}
\usepackage{colortbl}
\definecolor{lightred}{RGB}{255,230,230}
\definecolor{lightgreen}{RGB}{230,255,230}

%
%

\title{EffiReason-Bench: A Unified Benchmark for Evaluating and Advancing Efficient Reasoning in Large Language Models}


\author{
  \textbf{Junquan Huang\textsuperscript{1,*}},
  \textbf{Haotian Wu\textsuperscript{2,*}},
  \textbf{Yubo Gao\textsuperscript{1,*}},
  \textbf{Yibo Yan\textsuperscript{1,3}},
\\
  \textbf{Junyan Zhang\textsuperscript{1}},
  \textbf{Yonghua Hei\textsuperscript{1}},
  \textbf{Song Dai\textsuperscript{1,3}},
  \textbf{Jie Zhang\textsuperscript{2}},
  \textbf{Puay Siew Tan \textsuperscript{4}},
   \textbf{Xuming Hu\textsuperscript{1,3,$\dagger$}}
\\
  \textsuperscript{1}Hong Kong University of Science and Technology (Guangzhou),\\
  \textsuperscript{2}Nanyang Technological University,\\
  \textsuperscript{3}The Hong Kong University of Science and Technology,\\
  \textsuperscript{4}Singapore Institute of Manufacturing Technology, A*STAR,\\
  \href{mailto:xuminghu@hkust-gz.edu.cn}{xuminghu@hkust-gz.edu.cn}
}

\begin{document}
\maketitle

\begingroup
\renewcommand{\thefootnote}{\fnsymbol{footnote}} 
\footnotetext[1]{Equal contribution.}
\footnotetext[2]{Corresponding author.}
\endgroup
\setcounter{footnote}{0} 

\begin{abstract}
Large language models (LLMs) with Chain-of-Thought (CoT) prompting achieve strong reasoning but frequently generate unnecessarily verbose explanations, increasing costs and reducing accuracy. Despite the proliferation of efficiency-oriented approaches, fragmented evaluation practices hinder systematic comparison and obscure which strategies are effective under varying conditions. We present \textbf{EffiReason-Bench}, the first unified benchmark enabling rigorous cross-paradigm evaluation of efficient reasoning methods organized into three categories: \emph{Reasoning Blueprints}, \emph{Dynamic Execution}, and \emph{Post-hoc Refinement}. To enable comprehensive step-by-step reasoning evaluation, we construct verified CoT annotations for CommonsenseQA and LogiQA through a rigorous pipeline enforcing standardized reasoning structures, comprehensive option-wise analysis, and human verification. We evaluate 7 methods across 6 LLMs (1B-70B) on 4 reasoning datasets, covering mathematical, commonsense, and logical reasoning, and propose the \textbf{E$^3$-Score}, a principled metric inspired by economic trade-off modeling that provides smooth, stable evaluation without the discontinuities and over-reliance on heuristic dependencies plaguing prior measures. Experimental results demonstrate that no single method universally dominates: optimal strategies depend sensitively on backbone scale, task complexity, and architecture. 
\end{abstract}

\section{Introduction}
Large Language Models (LLMs) have shown strong capabilities in complex reasoning~\cite{chen2025towards,yan2024survey}, largely driven by the Chain-of-Thought (CoT) paradigm~\cite{wei2022chain}, where LLMs generate explicit step-by-step rationales to solve problems. This approach improves performance on logic-intensive tasks by providing structured intermediate steps~\cite{wang2023self,yao2023tree,besta2024graph}. However, it often leads to the ``overthinking phenomenon"~\cite{chen2024not,team2025kimi,feng2025efficient}, where unnecessarily long and redundant reasoning chains are produced even for simple tasks. Such verbosity not only increases computational cost but can also accumulate errors and reduce accuracy~\cite{sui2025stop}.

To mitigate these challenges, a growing body of research on \emph{efficient reasoning}~\cite{sui2025stop,wang2025harnessing,yue2025don} has proposed diverse strategies. These methods intervene at different stages of the reasoning process. Some approaches act before reasoning, serving as \emph{reasoning blueprints} that guide the LLM through reinforcement learning with length-aware rewards~\cite{shen2025dast,luo2025o1} or by using concise prompts~\cite{xu2025chain}. Others modify the generation process itself, applying \emph{dynamic execution} techniques such as reasoning in continuous space~\cite{zhang2025soft} or restructuring decoding~\cite{ning2024skeleton}. A third category, \emph{post-hoc refinement}, operates after generation, pruning, or compressing verbose outputs~\cite{xia2025tokenskip}. Though promising, the lack of a unified evaluation framework makes systematic comparison difficult. Key questions remain open: \textit{Which strategies provide the best accuracy–efficiency trade-off? How do they scale with LLM backbone size? Do their benefits transfer across reasoning domains?}

Existing benchmarks only partially address these questions. Some focus narrowly on specific method families, such as Sys2Bench~\cite{parashar2025inference} for inference-time search strategies, or on compression methods like quantization~\cite{liu2025quantization} and pruning~\cite{zhang2025reasoning}. Others analyze isolated phenomena, such as the effect of reasoning step length~\cite{jin2024impact} or the tendency to overthink in agentic tasks~\cite{cuadron2025danger}. Despite their contributions, these efforts remain fragmented. Moreover, current evaluation metrics for efficiency-effectiveness trade-off, such as Accuracy per Computation Unit (ACU)~\cite{ma2025cot} and Accuracy–Efficiency Score (AES)~\cite{luo2025o1} are limited: the former misrepresents improvements near strong baselines and the latter introduces discontinuities, and depends on excessive heuristic hyperparameters. As a result, both fair comparison and principled assessment of efficiency-effectiveness balance for efficient reasoning remain open challenges.

To fill these gaps, we introduce \textbf{EffiReason-Bench}, the first comprehensive benchmark for evaluating efficient reasoning methods across paradigms. EffiReason-Bench systematically compares seven representative methods under a consistent setup, spanning six open-source LLM backbones of varying scales and four datasets covering mathematics (\emph{GSM8K}, \emph{MATH500}), commonsense (\emph{CommonsenseQA}), and logic (\emph{LogiQA}) reasoning. Considering that the datasets CommonsenseQA and LogiQA do not provide the solution processes, to enable transparent evaluation of step-by-step reasoning processes beyond answer accuracy alone, we construct high-quality CoT annotations for CommonsenseQA and LogiQA through a rigorous three-stage pipeline: standardized reasoning structures, explicit comparative analysis across all options ensuring strict alignment with ground-truth answers, and dual human verification to eliminate logical inconsistencies.

To quantify the trade-off between efficiency and effectiveness, we further propose the \textbf{E$^3$-Score} (Efficiency–Effectiveness Equilibrium Score), a smooth and stable metric inspired by economic trade-off modeling. E$^3$-Score avoids discontinuities and heuristic tuning, emphasizes accuracy gains near strong baselines, and penalizes simultaneous degradation of efficiency and accuracy. Together, EffiReason-Bench and E$^3$-Score provide the first rigorous foundation for cross-paradigm evaluation of efficient reasoning.

\begin{figure}[htp]
    \centering
    \includegraphics[width=1.0\linewidth]{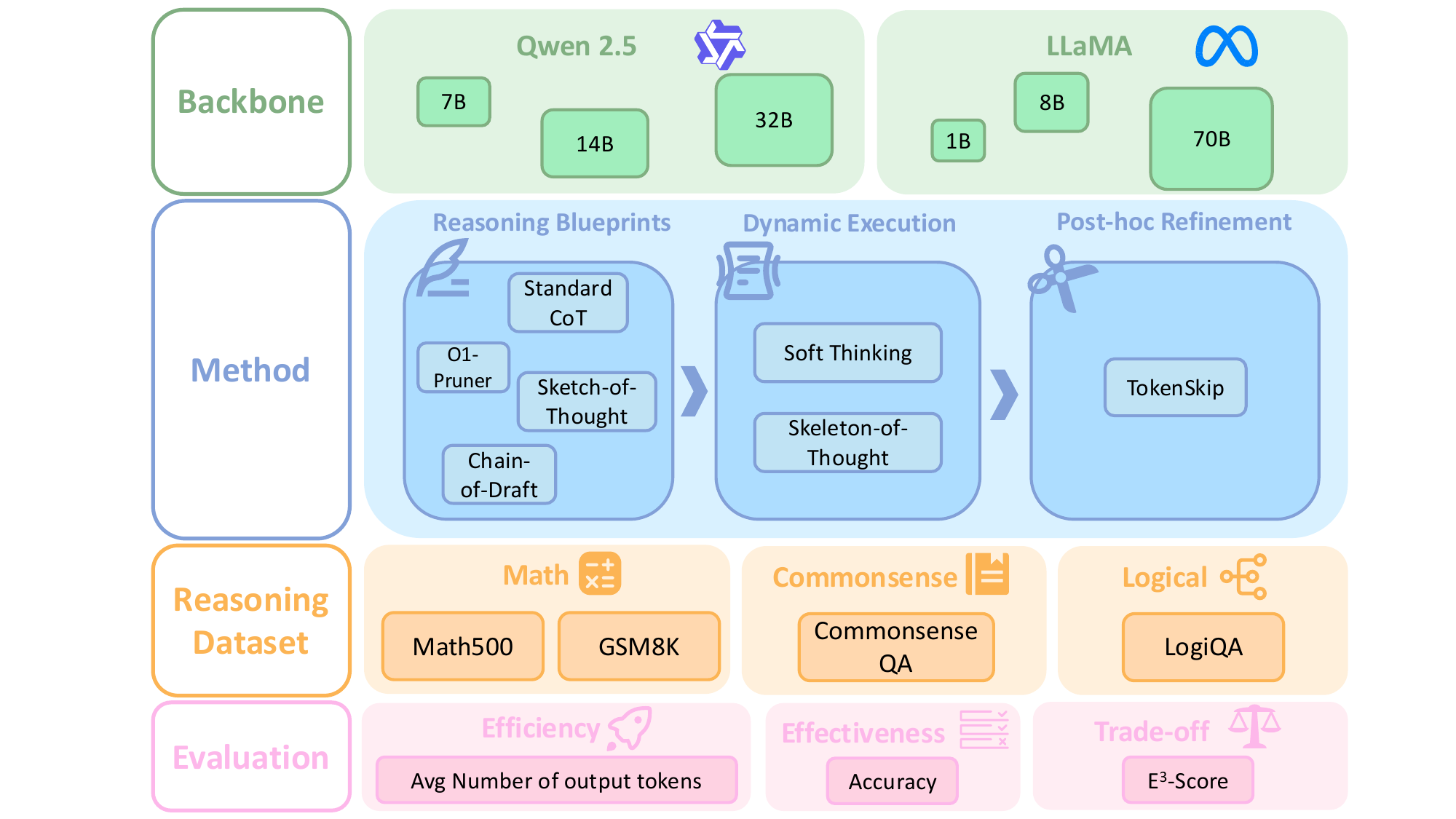}
    \caption{Overview of EffiReason-Bench, which compares efficient reasoning methods across diverse paradigms, backbones, and datasets.}
    \label{benchmark}
\end{figure}

\noindent Our contributions can be summarized as follows:

\ding{182} We propose \textbf{EffiReason-Bench}, the first benchmark to enable systematic and fair comparison of efficient reasoning methods across different paradigms, backbone scales, and reasoning domains.

\ding{183} We introduce the \textbf{E$^3$-Score}, a principled metric that provides smooth and reliable evaluation of efficiency–effectiveness trade-offs, addressing the limitations of prior measures.

\ding{184} We conduct extensive experiments with 7 methods, 6 backbones, and 4 datasets, revealing that optimal efficiency strategies depend on both backbones size and task complexity, and we will release all implementations for reproducibility.

\section{Related Work}

\noindent\textbf{The Overthinking Phenomenon}
The CoT paradigm~\cite{wei2022chain} enables LLMs to generate explicit intermediate reasoning steps, which has led to significant gains on complex reasoning tasks. However, subsequent studies have identified the tendency of models to produce excessively long and redundant reasoning chains, even for trivial problems~\cite{han-etal-2025-token,shen2025codi,xu2025softcot}. This ``overthinking'' increases computational cost and inference latency, and in some cases reduces accuracy due to the accumulation of errors. These observations have motivated a line of work on improving the efficiency of reasoning.

\noindent\textbf{Paradigms of Efficient Reasoning} Research on efficient reasoning can be broadly grouped into three paradigms depending on when efficiency interventions are applied. \textbf{Reasoning blueprints} modify the model’s behavior before inference, for example by fine-tuning with length-sensitive rewards~\cite{shen2025dast,luo2025o1,team2025kimi,yeo2025demystifying} or by designing concise prompts~\cite{xu2025chain,wang2025sampling,zhang2025lightthinker}. \textbf{Dynamic execution} methods adjust the reasoning process during inference, such as reasoning in latent space~\cite{hao2024training,cheng2024compressed,shen2025codi,shen2025efficient,xu2025softcot,zhang2025soft} or restructuring decoding with intermediate skeletons~\cite{ning2024skeleton}. \textbf{Post-hoc refinement} techniques operate after generation, e.g. pruning redundant steps~\cite{xia2025tokenskip}. These lines of work highlight complementary approaches to reducing reasoning cost, yet their effectiveness varies substantially across models and tasks.

\noindent\textbf{Benchmarks for Reasoning Efficiency}
Several benchmarks have been proposed to evaluate reasoning or efficiency, but they are limited in scope. Sys2Bench~\cite{parashar2025inference} provides systematic evaluation for inference-time search strategies, while other studies focus on model-level compression such as quantization and pruning~\cite{zhang2025reasoning,liu2025quantization}. Additional efforts investigate isolated phenomena, including the effect of reasoning length~\cite{jin2024impact}, overthinking in agentic settings~\cite{cuadron2025danger}, or latent-space reasoning~\cite{hagendorff2025beyond}. Although valuable, these works remain siloed, making it difficult to compare across paradigms or to establish general principles for efficient reasoning. Nonetheless, even with available benchmarks, achieving a fair comparison of efficiency–effectiveness balance crucially depends on the choice of evaluation metric. Prior measures exhibit clear limitations. \textit{Accuracy per Computation Unit (ACU)}~\cite{ma2025cot} normalizes accuracy by computational cost, but it undervalues small yet meaningful gains in high-accuracy regimes, where marginal improvements are most difficult to achieve. \textit{Accuracy–Efficiency Score (AES)}~\cite{luo2025o1} combines accuracy and efficiency through a piecewise formulation, but it relies on a series of manually chosen thresholds and heuristic hyperparameters, making results sensitive and often unstable across settings.

Our study differs from existing efforts in two key aspects. First, we establish \textbf{EffiReason-Bench}, the first benchmark that systematically compares efficient reasoning methods across paradigms, model scales, and reasoning domains under a unified experimental setup. Second, we propose the \textbf{E$^3$-Score}, a principled metric that provides smooth and fair evaluation of efficiency–effectiveness trade-offs, addressing the limitations of prior measures. Together, these contributions offer a rigorous and reproducible foundation for advancing research on efficient reasoning.

\section{Dataset, Task, and Pipeline Setup}
\textbf{EffiReason-Bench} addresses three key limitations in current evaluation practices. First, heterogeneous backbones and training strategies across prior studies make it challenging to isolate the specific contributions of efficiency methods from inherent LLM capabilities. Second, cross-domain generalization remains insufficiently explored, as most evaluations concentrate on specific task families. Third, existing metrics measuring the efficiency–effectiveness trade-off, suffer from discontinuities and heuristic dependencies, limiting their reliability. To address these challenges, EffiReason-Bench provides a unified evaluation platform enabling systematic \textit{cross-paradigm}, \textit{cross-backbone}, and \textit{cross-domain} assessment.

\subsection{Datasets}
To ensure coverage of diverse reasoning types, we select four established datasets spanning mathematical, commonsense, and logical reasoning:
\textbf{GSM8K}~\cite{cobbe2021training}, a collection of grade school math word problems requiring multi-step arithmetic;
\textbf{MATH500}, a curated subset of the MATH dataset~\cite{hendrycks2021measuring} comprising 500 competition-level problems designed to balance difficulty with computational feasibility for large-scale benchmarking;
\textbf{CommonsenseQA}~\cite{talmor2018commonsenseqa}, a multiple-choice dataset evaluating everyday commonsense reasoning;
and \textbf{LogiQA}~\cite{liu2020logiqa}, a benchmark derived from national logic examinations requiring conditional and deductive reasoning. 

Since CommonsenseQA and LogiQA provide only final answers without explanatory reasoning, we construct verified CoT solutions to enable comprehensive evaluation of step-by-step reasoning. To address the logical gaps and contradictions against ground-truth commonly found in auto-generated CoT data, we propose a rigorous construction pipeline with three core components. \textbf{First}, we adopt a standardized four-stage reasoning structure: premise identification, relationship formalization, condition evaluation, and option mapping. \textbf{Second}, we enforce explicit comparative analysis across all options: each solution must explain why the correct answer holds and why distractors fail within a unified logical framework, ensuring strict alignment between reasoning chains and ground-truth answers. \textbf{Third}, all annotations undergo dual human verification to eliminate logical inconsistencies and unsupported inferences. This pipeline yields high-quality CoT annotations that enable transparent cross-domain evaluation of reasoning processes beyond answer accuracy alone.

\subsection{Efficient Reasoning Methods}
We include 7 representative methods that span the three major paradigms of efficient reasoning, with Standard CoT serving as the reference substance.  

\noindent\textbf{Reasoning blueprints (Pre-process).}  
\emph{O1-Pruner} fine-tunes models using a length-aware reward function, encouraging concise reasoning while preserving correctness.  
\emph{Sketch-of-Thought (SoT)} introduces a lightweight router that maps inputs to high-level reasoning patterns (“sketches”), guiding the model to follow compressed reasoning templates.  
\emph{Chain-of-Draft} employs prompt instructions that restrict intermediate steps to very short drafts, explicitly constraining verbosity.  

\noindent\textbf{Dynamic execution (In-process).}  
\emph{Soft Thinking} enables reasoning in a continuous space by generating “concept tokens,” which represent probability-weighted mixtures of token embeddings. 
\emph{Skeleton-of-Thought (SloT)} restructures decoding into a two-stage process: first generating a high-level outline (skeleton) and then expanding details for each component, potentially reducing redundancy through parallel generation.  

\noindent\textbf{Post-hoc refinement (Post-process).}  
\emph{TokenSkip} operates on completed reasoning chains by removing tokens with low semantic contribution to the final answer, producing shorter rationales.  

\subsection{Backbone Models}
All methods are evaluated on 6 widely used open-source LLM backbones to ensure reproducibility and fair comparison across scales.  
We adopt the Qwen2.5-Instruct series (7B, 14B, 32B) and the LLaMA-Instruct series (1B, 8B, 70B), which enables controlled analysis of how efficiency strategies interact with model capacity.

\subsection{Training Criteria}
\subsubsection{Main Experiments}
We consider two regimes. In the \textit{Train-Free} setting, models perform direct inference without any parameter updates. In the \textit{Train-based} setting, models are first trained on the full training set and then evaluated by inference. 
For all datasets and backbones, the evaluation prompt consists of a fixed instruction header followed by the test instance. Answer formatting is standardized: the prediction must appear on a single line beginning with \texttt{``Final Answer:''}; multiple-choice tasks require a single option letter (\texttt{A/B/C/D}); open-ended math expects a numeric or short span. This unified protocol is consistently applied across all methods and both training regimes to ensure fair comparison.

\subsubsection{Few-shot Setting.}
Few-shot analysis is reported only when explicitly stated. We evaluate $k\in\{1,4,8,12\}$ shots. 
\noindent\emph{Exemplar construction and reuse.} Few-shot exemplars are drafted by GPT-4o and human-audited for clarity and correctness. For each dataset and each $k$, we finalize exemplar lists once with a fixed global seed and reuse them verbatim across methods, backbones, and both regimes to eliminate sampling variance. Each exemplar follows \texttt{Question $\rightarrow$ Reasoning (numbered steps) $\rightarrow$ Final Answer}. For mathematical tasks, the reasoning enumerates key intermediate arithmetic/transformations; for commonsense/logical tasks, it adopts \texttt{[Premises] $\rightarrow$ [Inference rule] $\rightarrow$ [Conclusion]} and, for multiple-choice data, includes a brief justification for distractors. Answer formatting follows the same rule as in the main experiments.

\noindent\emph{Variants.} Unless otherwise noted, exemplars are in-domain (same dataset as the evaluation task). We also probe a cross-domain variant (exemplars from a source dataset, evaluation on a target dataset from a different reasoning domain). To assess robustness, we introduce noisy exemplars via controlled perturbations covering: (i) reasoning-step noise (deletion/permutation/repetition), (ii) semantic noise (e.g., mild numeric or connector perturbations). Details are provided in Appendix~\ref{noise}.

\begin{table*}[htp]
\centering
\caption{Accuracy (\%) and average output tokens comparison of reasoning methods on \emph{GSM8K} dataset. Best performance is indicated in bold, and runner-up is underlined. Train-free methods are highlighted in green, while train-based methods are highlighted in red. ``OOM" is the abbreviation for out-of-memory \protect\footnotemark.}
\label{tab:gsm8k_performance}
\adjustbox{width=\textwidth}{
\begin{tabular}{@{}llcccccccccccccccccccccccc@{}}
\toprule
\textbf{Category} & \textbf{Methods} 
& \multicolumn{9}{c}{\textbf{Qwen 2.5 (Instruct)}} 
& \multicolumn{9}{c}{\textbf{LLaMA(Instruct)}} \\
\cmidrule(lr){3-11} \cmidrule(lr){12-20}
 & & \multicolumn{3}{c}{\textbf{7B}} & \multicolumn{3}{c}{\textbf{14B}} & \multicolumn{3}{c}{\textbf{32B}} 
 & \multicolumn{3}{c}{\textbf{1B}} & \multicolumn{3}{c}{\textbf{8B}} & \multicolumn{3}{c}{\textbf{70B}} \\
\cmidrule(lr){3-5} \cmidrule(lr){6-8} \cmidrule(lr){9-11} \cmidrule(lr){12-14} \cmidrule(lr){15-17} \cmidrule(lr){18-20}
 & & \textbf{Acc} $\uparrow$ & \textbf{Tokens} $\downarrow$ & \textbf{E$^3$} $\uparrow$
 & \textbf{Acc} $\uparrow$ & \textbf{Tokens} $\downarrow$ & \textbf{E$^3$} $\uparrow$
 & \textbf{Acc} $\uparrow$ & \textbf{Tokens} $\downarrow$ & \textbf{E$^3$} $\uparrow$
 & \textbf{Acc} $\uparrow$ & \textbf{Tokens} $\downarrow$ & \textbf{E$^3$} $\uparrow$
 & \textbf{Acc} $\uparrow$ & \textbf{Tokens} $\downarrow$ & \textbf{E$^3$} $\uparrow$
 & \textbf{Acc} $\uparrow$ & \textbf{Tokens} $\downarrow$ & \textbf{E$^3$} $\uparrow$ \\
\midrule
\textit{Reference Substance} & \cellcolor{lightgreen}CoT 
& \underline{90.60} & 292.86 & -
& \underline{93.63} & 278.61 & -
& 94.39 & 277.36 & -
& 22.44 & 227.37 & -
& 78.70 & 271.57 & -
& \textbf{95.68} & 240.65 & - \\
\midrule
\textit{Reasoning Blueprints} & \cellcolor{lightgreen}CoD 
& 64.82 & \textbf{45.94} & 0.21
& 79.23 & \textbf{48.67} & 0.28
& 86.35 & \textbf{56.09} & 0.40
& 1.82 & \textbf{138.96} & 0.25
& 58.45 & \underline{140.66} & 0.46
& 88.63 & \textbf{80.98} & 0.37 \\
 & \cellcolor{lightgreen}SoT 
& 72.02 & \underline{103.81} & 0.29
& 89.46 & \underline{89.28} & 0.61
& 89.69 & \underline{101.81} & 0.54
& 12.89 & 300.46 & 0.68
& 61.03 & \textbf{125.14} & 0.51
& 92.42 & \underline{84.81} & 0.57 \\
 & \cellcolor{lightred}O1-Pruner 
& 79.15 & 206.12 & 0.42
& 86.58 & 233.79 & 0.46
& OOM & OOM & OOM
& \underline{36.39} & 170.94 & \underline{1.44}
& 73.24 & 231.61 & 0.80
& OOM & OOM & OOM \\
\midrule
\textit{Dynamic Execution} & \cellcolor{lightgreen}Soft Thinking 
& \textbf{90.67} & 287.09 & \textbf{1.01}
& \textbf{94.39} & 272.54 & \textbf{1.14}
& \underline{94.47} & 276.39 & \underline{1.01}
& 35.10 & 208.84 & 1.20
& \textbf{84.31} & 248.25 & \textbf{1.36}
& \underline{95.53} & 238.12 & \underline{0.97} \\
 & \cellcolor{lightgreen}SloT 
& 70.05 & 109.78 & 0.27
& 85.67 & 134.29 & 0.43
& 86.88 & 127.25 & 0.41
& 4.55 & 209.63 & 0.48
& 53.68 & 324.74 & 0.36
& 93.86 & 206.63 & 0.70 \\
\midrule
\textit{Post-hoc Refinement} & \cellcolor{lightred}TokenSkip 
& 88.63 & 214.91 & \underline{0.84}
& 93.10 & 219.10 & \underline{0.93}
& \textbf{95.38} & 224.93 & \textbf{1.23}
& \textbf{43.67} & \underline{165.48} & \textbf{1.54}
& \underline{81.88} & 149.88 & \underline{1.31}
& \textbf{95.68} & 172.69 & \textbf{1.01} \\
\bottomrule
\end{tabular}}
\vspace{-0.5em}
\end{table*}
\footnotetext{OOM errors occurred during O1-Pruner's required, memory-intensive fine-tuning phase, which exceeded the 4x 80G A800 capacity for 32B/70B models, even though we set batch size = 1}

\subsection{Evaluation Strategy}
We evaluate models on both \emph{effectiveness} (accuracy) and \emph{efficiency} (average output tokens). To evaluate the efficiency-effectiveness trade-off, several prior metrics attempt to combine these two dimensions, but each has limitations. \textbf{ACU} linearly scales accuracy by computation, undervaluing small but crucial improvements in high-accuracy regimes. 
\textbf{AES} considers relative gains in accuracy and tokens, but it is piecewise and relies on heuristic hyperparameters, leading to instability. 

To overcome these issues, we propose the \textbf{Efficiency--Effectiveness Equilibrium Score (E$^3$-Score)}, a smooth metric inspired by the Constant Elasticity of Substitution (CES) formulation~\cite{mcfadden1963constant}, which has long been used to model trade-offs with controlled substitutability. 
Given baseline accuracy $A_0$ and tokens $T_0$, and method values $A, T$, we define
\begin{equation}
r_{\text{acc}} = (\tfrac{A}{1-A})/(\tfrac{A_0}{1-A_0}), 
\quad
r_{\text{tok}} = \frac{T_0}{T}.
\end{equation}
The E$^3$-Score is then given by
\begin{equation}
\small
\mathrm{E^3}(A,T) = \Big( w \cdot r_{\text{acc}}^{\rho} + (1-w)\cdot r_{\text{tok}}^{\rho} \Big)^{\tfrac{1}{\rho}},
\end{equation}
where the weight $w$ calibrates the \emph{relative importance} of accuracy versus efficiency to task difficulty; setting $w=A_0$ (the baseline accuracy) makes accuracy more prominent when the baseline is already strong and avoids extra tuning. The parameter $\rho$ governs \emph{substitutability}: with $\rho<0$ the two factors behave as complements so that efficiency gains cannot fully offset accuracy losses. We adopt $\rho=-1$ (a weighted harmonic mean) as the default, providing a conservative balance that penalizes asymmetric improvements and prevents excessive token savings from overshadowing accuracy degradation; without this knob (e.g., implicitly fixing $\rho=0$ or $1$) the metric becomes too permissive and less reliable.
This CES-inspired aggregation ensures a smooth trade-off: accuracy gains at high baselines are emphasized, efficiency is rewarded but cannot fully offset accuracy drops, and two-way degradation is strongly penalized. 

\begin{theorem}[Global sensitivity of $E_3$ to $\rho$]
\label{thm:rho-sensitivity}
Let $a=r_{\text{acc}}>0$, $b=r_{\text{tok}}>0$, and $w\in(0,1)$. Define $\Delta\coloneqq\lvert\log(a/b)\rvert$.
For any $\rho_1,\rho_2\in\mathbb{R}$ (interpreting $\rho=0$ by continuity of the power mean),
\begin{equation}
\small
    \bigl|\log E_3(A,T;\rho_1)-\log E_3(A,T;\rho_2)\bigr|
\ \le\ \frac{\Delta^2}{8}\,|\rho_1-\rho_2|.
\end{equation}
Equivalently, $\bigl|\partial_\rho \log E_3(A,T;\rho)\bigr|\le \Delta^2/8$ for all $\rho$, and
\begin{equation}
\small
    e^{-\frac{\Delta^2}{8}\,|\rho_1-\rho_2|}
\ \le\ \frac{E_3(A,T;\rho_1)}{E_3(A,T;\rho_2)}
\ \le\ e^{\frac{\Delta^2}{8}\,|\rho_1-\rho_2|}.
\end{equation}
The bound is independent of $w$.
\end{theorem}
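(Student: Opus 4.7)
The plan is to recognize the inner aggregator of $E_3$ as the moment generating function of a two-point distribution and exploit the classical identity that the second derivative of a cumulant generating function is a variance. Writing $x=\log a$, $y=\log b$ so that $\Delta=|x-y|$, I set $M(\rho)=w e^{\rho x}+(1-w)e^{\rho y}$ and $K(\rho)=\log M(\rho)$. Because $K(0)=0$, we have $\log E_3(A,T;\rho)=K(\rho)/\rho$ for $\rho\neq 0$, with the continuous extension $K'(0)=wx+(1-w)y$ at $\rho=0$ (which matches the geometric-mean limit of the power mean). This reframing is the key conceptual move and everything else is bookkeeping.

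The central estimate is a uniform bound $K''(\rho)\le \Delta^2/8 \cdot 2 = \Delta^2/4$. A direct differentiation gives $K''(\rho)=p(\rho)(1-p(\rho))(x-y)^2$, where $p(\rho)=w e^{\rho x}/M(\rho)$ is the exponentially tilted probability; this is the variance under the $\rho$-tilted two-point distribution. Combining the elementary bound $t(1-t)\le 1/4$ on $[0,1]$ with $(x-y)^2=\Delta^2$ yields $0\le K''(\rho)\le \Delta^2/4$ for every $\rho\in\mathbb{R}$. Crucially, the $w$-dependence is entirely absorbed into $p(\rho)(1-p(\rho))$ and is then dominated by $1/4$ uniformly, which is precisely why the final constant does not depend on $w$.

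To convert this into a bound on $\partial_\rho \log E_3$, I would use the integral representation $\log E_3(\rho)=K(\rho)/\rho=\int_0^1 K'(s\rho)\,ds$, obtained from $K(\rho)=\int_0^\rho K'(u)\,du$ via the substitution $u=s\rho$. Differentiating under the integral gives $\partial_\rho \log E_3(\rho)=\int_0^1 s\,K''(s\rho)\,ds$, and the $K''$ bound above immediately yields $|\partial_\rho \log E_3(\rho)|\le (\Delta^2/4)\int_0^1 s\,ds=\Delta^2/8$. Integrating from $\rho_2$ to $\rho_1$ delivers the additive Lipschitz bound, and exponentiating delivers the ratio bound. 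The endpoint $\rho=0$ is absorbed by the continuity already noted.

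The step I expect to be the real obstacle is engineering the constant $1/8$ cleanly. A brute-force differentiation of $K(\rho)/\rho$ produces a quotient with an apparent $\rho^{-2}$ singularity at $\rho=0$, and bounding it uniformly requires a delicate Taylor-expansion cancellation; the integral representation $\int_0^1 K'(s\rho)\,ds$ sidesteps the singularity entirely and also makes the factor $\int_0^1 s\,ds=1/2$ combine transparently with the sharp $\max_t t(1-t)=1/4$ to produce the advertised $1/8$. Once this device is in place, everything else (independence from $w$, validity for all real $\rho$, and the two equivalent statements) follows by routine integration and exponentiation.
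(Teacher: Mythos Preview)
Your proof is correct and follows essentially the same route as the paper: recognize $K(\rho)=\log\bigl(w e^{\rho x}+(1-w)e^{\rho y}\bigr)$ as a cumulant generating function, bound $K''(\rho)=p(\rho)(1-p(\rho))\Delta^2\le\Delta^2/4$ uniformly, and integrate to pick up the extra factor $1/2$. The only cosmetic difference is that you write $K(\rho)/\rho=\int_0^1 K'(s\rho)\,ds$ and differentiate under the integral, whereas the paper differentiates the quotient directly to get $L'(\rho)=\rho^{-2}\int_0^\rho t\,f''(t)\,dt$ and handles $\rho=0$ by a separate limit; the two formulas coincide under the substitution $t=s\rho$, so the arguments are equivalent.
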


\section{Experiments and Analysis}

\subsection{Mathematical Reasoning Tasks}
We evaluate performance on two mathematical reasoning datasets: GSM8K and MATH500. Tables 1 and 2 reveal that efficiency strategy effectiveness is highly contingent on task complexity, backbone scale, and architectural choices.

\begin{table*}[htp]
\centering
\caption{Accuracy (\%) and average output tokens comparison of reasoning methods on \emph{MATH 500} dataset.}
\label{tab:math500_performance}
\adjustbox{width=\textwidth}{
\begin{tabular}{@{}llcccccccccccccccccccccccc@{}}
\toprule
\textbf{Category} & \textbf{Methods} 
& \multicolumn{9}{c}{\textbf{Qwen 2.5 (Instruct)}} 
& \multicolumn{9}{c}{\textbf{LLaMA (Instruct)}} \\
\cmidrule(lr){3-11} \cmidrule(lr){12-20}
 & & \multicolumn{3}{c}{\textbf{7B}} & \multicolumn{3}{c}{\textbf{14B}} & \multicolumn{3}{c}{\textbf{32B}} 
 & \multicolumn{3}{c}{\textbf{1B}} & \multicolumn{3}{c}{\textbf{8B}} & \multicolumn{3}{c}{\textbf{70B}} \\
\cmidrule(lr){3-5} \cmidrule(lr){6-8} \cmidrule(lr){9-11} \cmidrule(lr){12-14} \cmidrule(lr){15-17} \cmidrule(lr){18-20}
 & & \textbf{Acc} $\uparrow$ & \textbf{Tokens} $\downarrow$ & \textbf{E$^3$} $\uparrow$
 & \textbf{Acc} $\uparrow$ & \textbf{Tokens} $\downarrow$ & \textbf{E$^3$} $\uparrow$
 & \textbf{Acc} $\uparrow$ & \textbf{Tokens} $\downarrow$ & \textbf{E$^3$} $\uparrow$
 & \textbf{Acc} $\uparrow$ & \textbf{Tokens} $\downarrow$ & \textbf{E$^3$} $\uparrow$
 & \textbf{Acc} $\uparrow$ & \textbf{Tokens} $\downarrow$ & \textbf{E$^3$} $\uparrow$
 & \textbf{Acc} $\uparrow$ & \textbf{Tokens} $\downarrow$ & \textbf{E$^3$} $\uparrow$ \\
\midrule
\textit{Reference Substance} & \cellcolor{lightgreen}CoT 
& \underline{75.00} & 579.00 & --
& \underline{77.00} & 558.31 & --
& \textbf{81.40} & 544.64 & --
& 13.00 & 701.48 & --
& 33.20 & 739.55 & --
& \textbf{75.20} & 582.59 & -- \\
\midrule
\textit{Reasoning Blueprints} & \cellcolor{lightgreen}CoD 
& 14.80 & \textbf{145.34} & 0.08
& 45.00 & \textbf{71.31} & 0.31
& 55.40 & \textbf{93.40} & 0.34
& 2.80 & \textbf{246.15} & 1.02
& 23.40 & \textbf{363.00} & \textbf{1.15}
& 50.00 & \textbf{249.99} & 0.42 \\
 & \cellcolor{lightgreen}SoT 
& 56.80 & 328.58 & 0.54
& 62.60 & \underline{177.73} & \underline{0.62}
& 68.60 & 192.79 & \underline{0.59}
& 8.80 & 556.54 & 1.12
& 26.20 & 486.41 & \underline{1.11}
& 64.80 & \underline{277.70} & 0.74 \\
 & \cellcolor{lightred}O1-Pruner 
& 59.20 & 455.38 & \underline{0.57}
& 64.00 & 514.32 & 0.60
& OOM & OOM & OOM
& 17.40 & \underline{364.58} & \textbf{1.84}
& 37.00 & 464.14 & 1.43
& OOM & OOM & OOM \\
\midrule
\textit{Dynamic Execution} & \cellcolor{lightgreen}Soft Thinking 
& \textbf{75.40} & 612.63 & \textbf{1.00}
& \textbf{80.20} & 582.17 & \textbf{1.14}
& \underline{81.00} & 555.23 & \textbf{0.98}
& \underline{24.00} & 704.77 & 1.07
& \textbf{48.00} & 660.48 & 1.29
& \underline{73.60} & 555.74 & \textbf{0.95} \\
 & \cellcolor{lightgreen}SloT 
& 46.20 & \underline{157.77} & 0.37
& 57.00 & 184.93 & 0.49
& 58.80 & \underline{172.56} & 0.39
& 5.40 & 579.19 & 0.94
& 22.60 & 587.14 & 0.91
& 59.00 & 312.02 & 0.58 \\
\midrule
\textit{Post-hoc Refinement} & \cellcolor{lightred}TokenSkip 
& 40.80 & 314.40 & 0.29
& 42.20 & 318.73 & 0.27
& 44.40 & 314.16 & 0.22
& \textbf{27.60} & 404.98 & \underline{1.81}
& \underline{46.00} & \underline{355.32} & 1.94
& 66.60 & 372.25 & \underline{0.77} \\
\bottomrule
\end{tabular}}
\vspace{-0.5em}
\end{table*}
Within the \textbf{Reasoning Blueprints category}, we observe a fundamental accuracy-efficiency trade-off. The train-based O1-Pruner consistently prioritizes accuracy preservation, often surpassing the CoT baseline on MATH500 for LLaMA 1B and 8B, while achieving more modest compression ratios. Conversely, train-free methods such as CoD and SoT attain maximum token compression but at significant accuracy cost, demonstrating the inverse tendency. This pattern suggests that learned pruning strategies better preserve reasoning integrity than heuristic compression approaches.

The \textbf{Dynamic Execution} paradigm exhibits striking performance divergence. Soft Thinking maintains near-identical accuracy to the CoT baseline on GSM8K and consistently matches or exceeds it on MATH500, demonstrating remarkable robustness across backbone scales. In contrast, Skeleton-of-Thought induces substantial accuracy degradation in most configurations, particularly on the more challenging MATH500 benchmark. This disparity indicates that adaptive token generation mechanisms prove more reliable than parallel decoding strategies for complex reasoning tasks.

\textbf{Post-hoc Refinement} through TokenSkip reveals a critical backbone-dependent phenomenon. While the method achieves balanced accuracy-efficiency trade-offs on GSM8K across most of the backbones, its behavior on MATH500 diverges dramatically by architecture. TokenSkip induces catastrophic accuracy collapse on Qwen, with degradations ranging from 34.2\% to 37.0 \%, yet proves remarkably effective on LLaMA backbones, improving accuracy on 1B and 8B variants while incurring only an 8.6 \% loss on the 70B variants. This architectural sensitivity suggests that TokenSkip's pruning criteria align with LLaMA's internal reasoning representations but fundamentally misalign with Qwen's computational structure, highlighting the importance of architecture-aware optimization.

\subsection{Commonsense Reasoning Tasks}
On the CommonsenseQA dataset (Table 3), we observe a stark contrast to the mathematical reasoning tasks. Commonsense reasoning demonstrates high robustness to token compression, with many efficiency methods maintaining stable or even superior accuracy while drastically reducing token counts.
\begin{table*}[htp]
\centering
\caption{Accuracy (\%) and average output tokens comparison of reasoning methods on \emph{Commonsense} dataset.}
\label{tab:commonse_performance}
\adjustbox{width=\textwidth}{
\begin{tabular}{@{}llcccccccccccccccccccccccc@{}}
\toprule
\textbf{Category} & \textbf{Methods} 
& \multicolumn{9}{c}{\textbf{Qwen 2.5 (Instruct)}} 
& \multicolumn{9}{c}{\textbf{LLaMA (Instruct)}} \\
\cmidrule(lr){3-11} \cmidrule(lr){12-20}
 & & \multicolumn{3}{c}{\textbf{7B}} & \multicolumn{3}{c}{\textbf{14B}} & \multicolumn{3}{c}{\textbf{32B}} 
 & \multicolumn{3}{c}{\textbf{1B}} & \multicolumn{3}{c}{\textbf{8B}} & \multicolumn{3}{c}{\textbf{70B}} \\
\cmidrule(lr){3-5} \cmidrule(lr){6-8} \cmidrule(lr){9-11} \cmidrule(lr){12-14} \cmidrule(lr){15-17} \cmidrule(lr){18-20}
 & & \textbf{Acc} $\uparrow$ & \textbf{Tokens} $\downarrow$ & \textbf{E$^3$} $\uparrow$
 & \textbf{Acc} $\uparrow$ & \textbf{Tokens} $\downarrow$ & \textbf{E$^3$} $\uparrow$
 & \textbf{Acc} $\uparrow$ & \textbf{Tokens} $\downarrow$ & \textbf{E$^3$} $\uparrow$
 & \textbf{Acc} $\uparrow$ & \textbf{Tokens} $\downarrow$ & \textbf{E$^3$} $\uparrow$
 & \textbf{Acc} $\uparrow$ & \textbf{Tokens} $\downarrow$ & \textbf{E$^3$} $\uparrow$
 & \textbf{Acc} $\uparrow$ & \textbf{Tokens} $\downarrow$ & \textbf{E$^3$} $\uparrow$ \\
\midrule
\textit{Reference Substance} & \cellcolor{lightgreen}CoT 
& \underline{80.00} & 286.47 & -
& 81.23 & 204.10 & -
& 84.20 & 220.43 & -
& 18.52 & 216.88 & -
& 62.72 & 249.77 & -
& \underline{83.37} & 332.19 & - \\
\midrule
\textit{Reasoning Blueprints} & \cellcolor{lightgreen}CoD 
& \textbf{82.47} & \textbf{7.59} & \textbf{1.46}
& \underline{81.40} & \textbf{7.58} & \textbf{1.23}
& 83.79 & \textbf{10.06} & 1.14
& 4.61 & \textbf{117.40} & 0.76
& 56.38 & \textbf{56.25} & 1.11
& 80.33 & \textbf{10.14} & 0.97 \\
 & \cellcolor{lightgreen}SoT 
& 78.35 & \underline{15.72} & \underline{1.12}
& 80.99 & \underline{26.62} & \underline{1.18}
& \underline{84.53} & \underline{28.08} & \textbf{1.19}
& 12.84 & 245.68 & 0.83
& 55.31 & 179.47 & 0.89
& 80.25 & \underline{36.27} & \textbf{1.14} \\
 & \cellcolor{lightred}O1-Pruner 
& 78.68 & 117.94 & 1.05
& 80.66 & 198.85 & 0.97
& OOM & OOM & OOM
& \underline{39.75} & 184.61 & 1.32
& \underline{71.28} & 100.87 & \textbf{1.74}
& OOM & OOM & OOM \\
\midrule
\textit{Dynamic Execution} & \cellcolor{lightgreen}Soft Thinking 
& \underline{80.00} & 273.83 & 1.01
& 79.84 & 180.40 & 0.95
& 83.62 & 205.00 & 0.97
& 23.87 & 157.51 & \underline{1.38}
& 69.47 & 212.05 & 1.28
& \textbf{83.54} & 318.85 & 1.02 \\
 & \cellcolor{lightgreen}SloT 
& 72.35 & 48.13 & 0.80
& 77.04 & 119.52 & 0.86
& 76.95 & 132.36 & 0.69
& 4.28 & 163.31 & 0.64
& 25.50 & 236.54 & 0.29
& 79.51 & 180.51 & 0.86 \\
\midrule
\textit{Post-hoc Refinement} & \cellcolor{lightred}TokenSkip 
& \underline{79.34} & 209.53 & 1.02
& \textbf{81.98} & 147.99 & 1.10
& \textbf{85.60} & 145.66 & \underline{1.16}
& \textbf{53.25} & \underline{134.61} & \textbf{1.84}
& \textbf{73.58} & \underline{159.44} & \underline{1.62}
& 83.21 & 241.30 & \underline{1.04} \\
\bottomrule
\end{tabular}}
\vspace{-0.5em}
\end{table*}

\textbf{Reasoning Blueprints} exhibit extreme efficiency. The train-free CoD and SoT methods successfully compress token counts by over 90\% (e.g., Qwen 7B's 286.47 tokens reduced to 7.59 by CoD). Under this extreme compression, CoD largely maintains accuracy on Qwen (even slightly improving it on 7B and 14B) but shows a performance drop on LLaMA. The train-based O1-Pruner again demonstrates its ability to enhance weaker baselines, significantly improving LLaMA 8B's accuracy from 62.72\% to 71.28\%.

Among \textbf{Dynamic Execution} methods, Soft Thinking again proves its robustness. Its accuracy and token count remain highly consistent with the CoT baseline across all backbones (e.g., on Qwen 7B and LLaMA 70B) and deliver a notable improvement on LLaMA 8B (62.72\% to 69.47\%). Conversely, SloT performs poorly on this task. While it significantly reduces tokens, it also incurs a consistent accuracy drop across most of the backbones, culminating in a performance collapse on LLaMA 8B (62.72\% to 25.50\%).

\textbf{Post-hoc Refinement}: TokenSkip emerges as an exceptionally safe and effective strategy for commonsense reasoning. While moderately reducing tokens, it either maintains or improves accuracy on all backbones. It shows positive effects on both mid-performance baselines (like LLaMA 8B, from 62.72\% to 73.58\%) and high-performance ones (like Qwen 32B, from 84.20\% to 85.60\%). This performance stands in sharp contrast to its catastrophic impact on Qwen in the MATH500 task.
 
\subsection{Logical Reasoning Tasks}
Results on the LogiQA dataset presented in Table 4 reveal the heightened sensitivity of logical reasoning to compression strategies. In contrast to commonsense reasoning, logical inference demands greater preservation of reasoning step integrity, rendering aggressive compression approaches substantially more precarious.
\begin{table*}[htp]
\centering
\caption{Accuracy (\%) and average output tokens comparison of reasoning methods on \emph{LogiQA} dataset. }
\label{tab:logiqa_performance}
\adjustbox{width=\textwidth}{
\begin{tabular}{@{}llcccccccccccccccccccccccc@{}}
\toprule
\textbf{Category} & \textbf{Methods} 
& \multicolumn{9}{c}{\textbf{Qwen 2.5(Instruct)}} 
& \multicolumn{9}{c}{\textbf{LLaMA(Instruct)}} \\
\cmidrule(lr){3-11} \cmidrule(lr){12-20}
 & & \multicolumn{3}{c}{\textbf{7B}} & \multicolumn{3}{c}{\textbf{14B}} & \multicolumn{3}{c}{\textbf{32B}} 
 & \multicolumn{3}{c}{\textbf{1B}} & \multicolumn{3}{c}{\textbf{8B}} & \multicolumn{3}{c}{\textbf{70B}} \\
\cmidrule(lr){3-5} \cmidrule(lr){6-8} \cmidrule(lr){9-11} \cmidrule(lr){12-14} \cmidrule(lr){15-17} \cmidrule(lr){18-20}
 & & \textbf{Acc} $\uparrow$ & \textbf{Tokens} $\downarrow$ & \textbf{E$^3$} $\uparrow$
 & \textbf{Acc} $\uparrow$ & \textbf{Tokens} $\downarrow$ & \textbf{E$^3$} $\uparrow$
 & \textbf{Acc} $\uparrow$ & \textbf{Tokens} $\downarrow$ & \textbf{E$^3$} $\uparrow$
 & \textbf{Acc} $\uparrow$ & \textbf{Tokens} $\downarrow$ & \textbf{E$^3$} $\uparrow$
 & \textbf{Acc} $\uparrow$ & \textbf{Tokens} $\downarrow$ & \textbf{E$^3$} $\uparrow$
 & \textbf{Acc} $\uparrow$ & \textbf{Tokens} $\downarrow$ & \textbf{E$^3$} $\uparrow$ \\
\midrule
\textit{Reference Substance} & \cellcolor{lightgreen}CoT 
& 51.92 & 471.95 & --
& 56.37 & 422.58 & --
& 63.44 & 399.54 & --
& 14.29 & 461.17 & --
& 33.95 & 534.32 & --
& \underline{63.13} & 530.36 & -- \\
\midrule
\textit{Reasoning Blueprints} & \cellcolor{lightgreen}CoD 
& 49.77 & \textbf{27.45} & \underline{1.68}
& 55.76 & \textbf{20.35} & \textbf{1.67}
& 60.52 & \textbf{51.35} & \underline{1.31}
& 8.60 & \textbf{105.99} & \textbf{2.22}
& 30.11 & \underline{280.66} & 1.33
& 53.00 & \textbf{35.66} & 1.02 \\
 & \cellcolor{lightgreen}SoT 
& \underline{53.46} & \underline{83.58} & \textbf{1.74}
& 55.61 & \underline{85.50} & \underline{1.49}
& 63.90 & \underline{96.43} & \textbf{1.41}
& 11.98 & 673.31 & 0.70
& 32.26 & 510.20 & 1.00
& 58.22 & \underline{142.67} & \underline{1.14} \\
 & \cellcolor{lightred}O1-Pruner 
& 51.92 & 417.19 & 1.06
& 55.61 & 463.90 & 0.94
& OOM & OOM & OOM
& \underline{24.27} & 320.35 & 1.49
& \underline{43.16} & 303.59 & \underline{1.65}
& OOM & OOM & OOM \\
\midrule
\textit{Dynamic Execution} & \cellcolor{lightgreen}Soft Thinking 
& \textbf{55.15} & 475.83 & 1.06
& \textbf{60.37} & 405.13 & 1.12
& \underline{65.44} & 393.80 & 1.06
& 17.67 & 371.33 & 1.25
& \textbf{44.09} & 426.06 & 1.34
& \underline{63.13} & 506.40 & 1.02 \\
 & \cellcolor{lightgreen}SloT 
& 46.54 & 90.56 & 1.36
& 56.84 & 145.03 & 1.42
& 62.06 & 156.49 & 1.23
& 7.07 & \underline{231.50} & 1.35
& 25.50 & 376.36 & 1.03
& 58.06 & 238.99 & 1.06 \\
\midrule
\textit{Post-hoc Refinement} & \cellcolor{lightred}TokenSkip 
& 51.61 & 365.02 & 1.11
& \underline{58.53} & 305.96 & 1.20
& \textbf{66.05} & 294.23 & 1.20
& \textbf{27.19} & 262.90 & \underline{1.81}
& \textbf{44.09} & \textbf{278.12} & \textbf{1.77}
& \textbf{63.59} & 357.94 & \textbf{1.15} \\
\bottomrule
\end{tabular}}
\vspace{-0.5em}
\end{table*}
The \textbf{Reasoning Blueprints} category yields mixed results. The train-free CoD method, while achieving the most extreme token compression, consistently degrades accuracy across all backbones. SoT is similarly aggressive in compression but exhibits unstable performance: it marginally improves accuracy on Qwen 7B and 32B but harms performance on LLaMA backbones and other Qwen. In contrast, the train-based O1-Pruner again proves highly effective at improving weak backbones, substantially lifting LLaMA 1B accuracy from 14.29\% to 24.27\% and LLaMA 8B from 33.95\% to 43.16\%.

Within the \textbf{Dynamic Execution} category, Soft Thinking is a standout performer for accuracy, achieving performance gains across all backbones and notably on LLaMA 8B (33.95\% $\rightarrow$44.09\%) and Qwen 14B (56.37\% $\rightarrow$ 60.37\%). This accuracy enhancement, however, comes at the cost of minimal efficiency gains, as its token consumption is nearly identical to the CoT baseline. Conversely, SloT performs poorly on this task. Despite a neutral result on Qwen 14B, it degrades accuracy on almost all other backbones and causes a performance collapse on LLaMA 1B.

\textbf{Post-hoc Refinement} via TokenSkip emerges as a balanced and robust strategy for logical reasoning. It achieves moderate token compression while universally maintaining or improving accuracy across all backbones and scales. Its benefits are particularly pronounced on the LLaMA backbone, where it dramatically improves the LLaMA 1B (14.29\% $\rightarrow$ 27.19\%) and LLaMA 8B (33.95\% $\rightarrow$ 44.09\%) baselines, and it also delivers a strong gain on Qwen 32B (63.44\% $\rightarrow$ 66.05\%).

\subsection{Few-shot Learning Setting}
\label{sec:few-shot}
In the few-shot experiments, we compare four representative methods: SloT, SoT, CoT, and CoD. Other approaches, such as TokenSkip and O1-Pruner, were excluded, as they did not learn effective patterns under limited supervision. Results are reported under 1-, 4-, 8-, and 12-shot conditions.  

\paragraph{Clean few-shot evaluation.}
Key obversions are shown as follows: \textbf{First}, as illustrated by the solid lines, CoT serves as a high-accuracy, high-token baseline across most of the tasks.
\textbf{Second}, the most significant finding is the superior scalability of SloT (solid green line). On MATH500, GSM8K, and CommonsenQA, SloT's accuracy improves with more shots, even approaching CoT. On Commonsense, SloT's accuracy surpasses that of CoT at 4, 8, or 12 shots. A potential mechanism is that SloT's structured nature (skeleton-then-expansion) allows it to effectively learn the structural patterns of reasoning from exemplars; thus, more shots lead to a more robust learned reasoning structure.
\textbf{Third}, conversely, the Reasoning Blueprint methods (CoD and SoT) exhibit rigidity. While CoD (solid orange line) and SoT (solid red line) achieve extreme token compression, their accuracy shows more instability than CoT and SloT across all tasks and shows minimal improvement on GSM8K with more shots. This suggests these methods primarily learn a surface-level pattern ("be short"), and this rigid compression inhibits their ability to learn complex reasoning from additional exemplars.
\begin{figure*}[htp]
    \centering
    \includegraphics[width=0.7\linewidth]{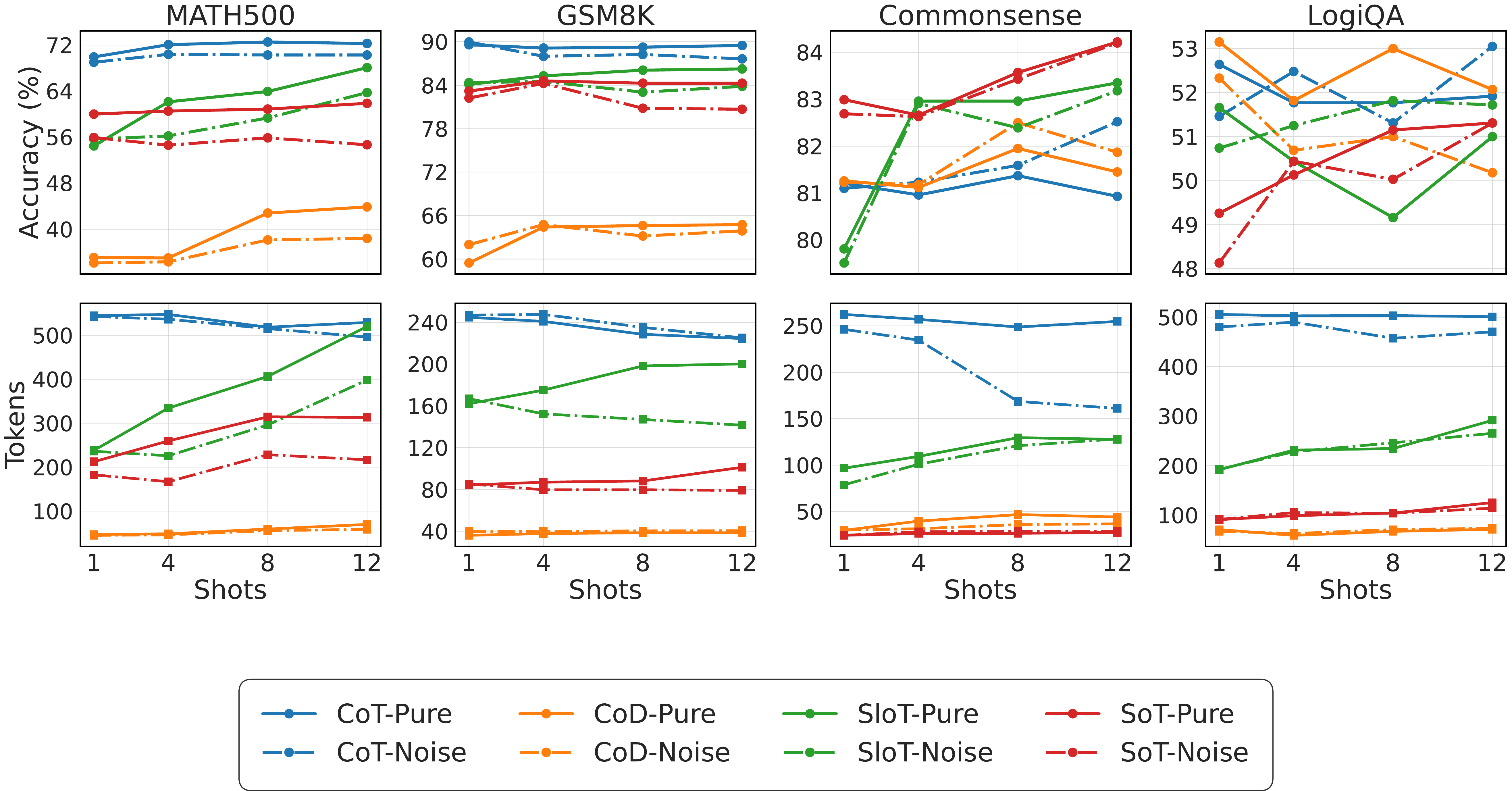}
    \caption{Few-shot performance (accuracy on top, tokens on bottom) of four reasoning methods on Qwen2.5-7B across four datasets, comparing clean vs. noisy settings.}
    \label{few-shot}
\end{figure*}

\paragraph{Noise injection setting.}
We further examine robustness by injecting noise into few-shot exemplars.  
In most cases, test methods degrade under noisy demonstrations, but the extent differs. SoT and CoD are most vulnerable, especially on mathematically intensive tasks, while SloT shows comparatively stable performance. CoT remains the most consistent across domains, with minimal losses and sometimes even shorter outputs. This suggests that blueprint methods depend heavily on surface-level exemplar patterns, so perturbations in reasoning steps, semantics, or formatting directly disrupt their effectiveness. In contrast, SloT extracts higher-level structural organization, which is less sensitive to local corruption, and CoT’s unconstrained generation helps maintain baseline stability.  
\textit{Implication.} In practice, exemplar quality cannot always be guaranteed. Therefore, efficiency methods must consider robustness: structure-aware decoding (SloT) offers a safer choice under noisy supervision, while blueprint compression strategies need additional safeguards to remain reliable.

\subsection{Cross-domain Reasoning Setting}
We further evaluate cross-domain transfer in the few-shot setting, where exemplars are drawn from a source domain and evaluated on a target task from a different domain. The results, shown in Figure 3 and Figure 4, indicate that cross-domain transfer presents a significant robustness challenge, with performance varying considerably across methods, transfer paths, and shot counts.
\begin{figure}[htp]
    \centering
    \includegraphics[width=1.0\linewidth]{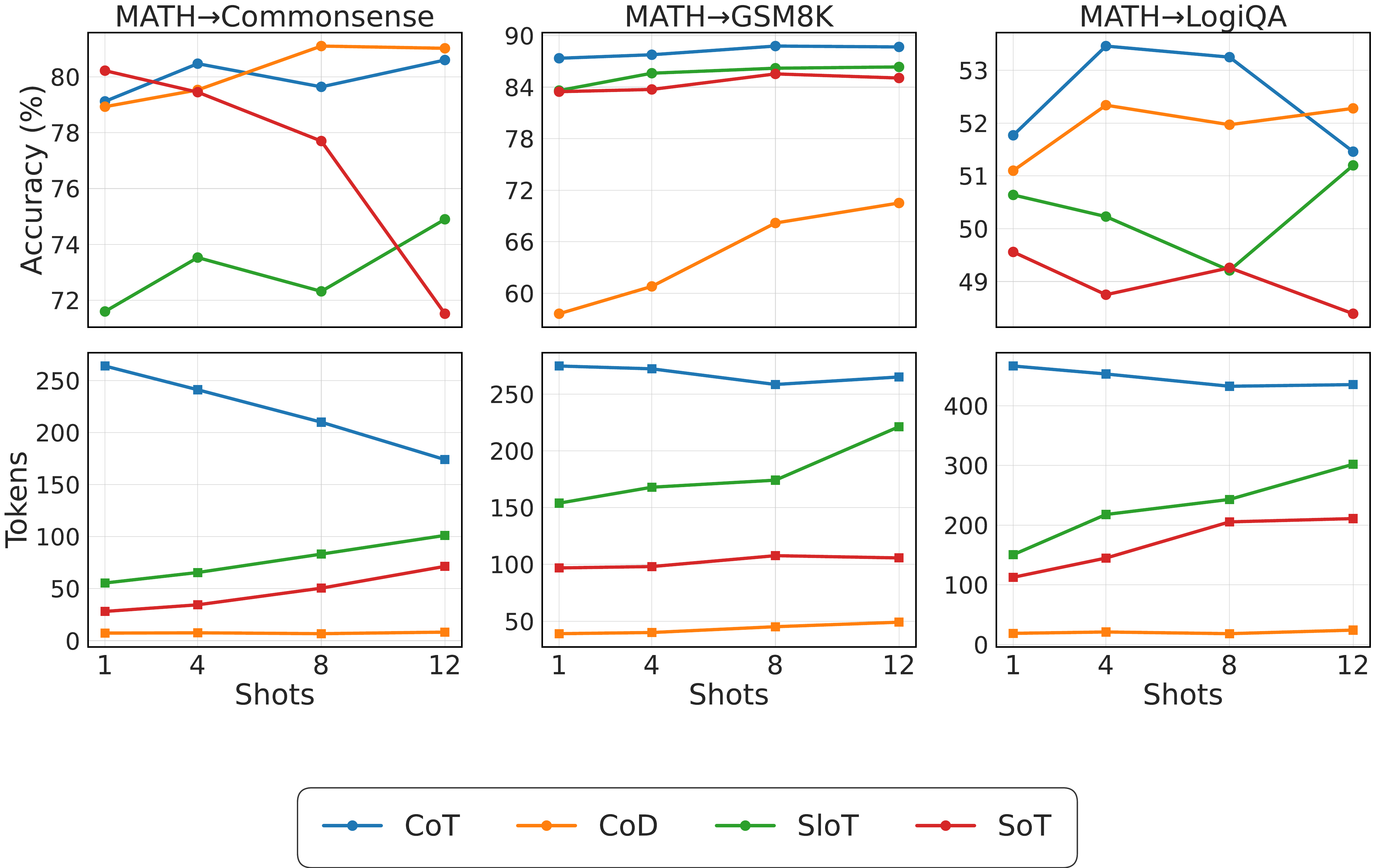}
    \caption{Comparison of few-shot transfer from \textbf{MATH500} to Commonsense, GSM8K, and LogiQA, in terms of accuracy (top) and tokens (bottom).}
    \label{fig:placeholder}
\end{figure}

\textbf{First}, CoT generally demonstrates the strongest stability. In most transfer scenarios, CoT (blue line) maintains a high level of accuracy that is relatively less perturbed by the number of shots or the change in domain, serving as a robust cross-domain baseline.
\textbf{Second}, SloT shows shot-dependent cross-domain reasoning transfer. SloT's (green line) performance is not always strong at 1-shot, as seen in the LOGIQA->MATH transfer, where it starts far below CoT. However, SloT is highly sensitive to exemplar count: its accuracy rises substantially with more shots in multiple settings (e.g., LOGIQA->MATH and MATH->Commonsense). This suggests SloT may be capable of learning transferable reasoning structures from out-of-domain exemplars, but this learning is more data-dependent than CoT's.
\textbf{Third}, Reasoning Blueprint methods (CoD, SoT) transfer poorly and unreliably. CoD (orange line) exhibits low accuracy in most cross-domain settings, though it shows modest improvement with more shots in near-domain transfer (e.g., MATH->GSM8K). SoT's (red line) behavior is particularly volatile: in the MATH->Commonsense transfer, its accuracy plummets as shots increase, while in the LOGIQA->GSM8K transfer, it remains consistently low.
\textbf{Forth}, domain similarity appears to be a key factor. Performance in near-domain transfers (e.g., MATH->GSM8K, both math tasks) is generally more stable for most methods than in far-domain transfers (e.g., LOGIQA->Commonsense). In most of the far-domain scenarios, we observe more significant volatility in accuracy for nearly all methods, including CoT and SloT. This indicates that methods reliant on surface patterns (CoD, SoT) or specific structures (SloT) may be particularly brittle when faced with a substantial domain shift.
\begin{figure}[htp]
    \centering
    \includegraphics[width=1.0\linewidth]{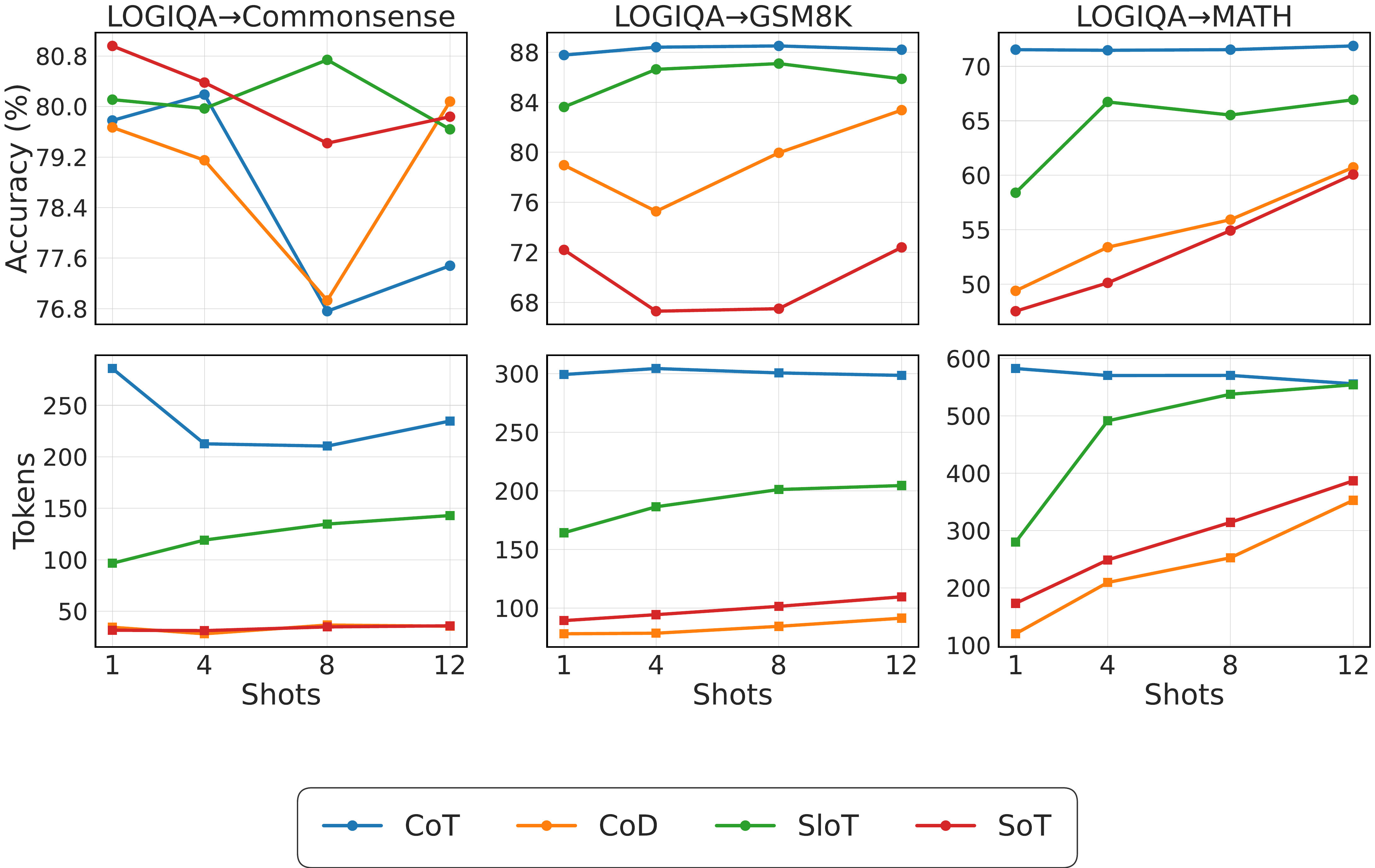}
    \caption{Comparison of few-shot transfer from \textbf{LogiQA} to Commonsense, GSM8K, and MATH500 in terms of accuracy (top) and tokens (bottom).}
    \label{fig:placeholder}
\end{figure}

\section{Conclusion}
We present EffiReason-Bench, a unified benchmark for efficient reasoning with E$^3$-Score, a CES-inspired metric that jointly measures accuracy and token efficiency. Evaluating six open-source backbones across mathematics, commonsense, and logic tasks, we find that train-free blueprint methods (CoD/SoT) achieve extreme compression but frequently sacrifice accuracy, while the train-based O1-Pruner preserves or improves accuracy with modest efficiency gains. Post-hoc TokenSkip refinement delivers favorable trade-offs on commonsense, logic, and LLaMA backbones, yet degrades MATH500 accuracy substantially on Qwen, revealing strong architecture and domain sensitivity. Among dynamic execution methods, latent-space Soft Thinking exhibits consistent robustness, whereas structure-first SloT shows instability in zero-shot settings but scales effectively with additional shots and curated exemplars. By standardizing evaluation and releasing reproducible implementations, EffiReason-Bench establishes a rigorous foundation for fair comparison and development of adaptive, efficient reasoning strategies.

\bibliography{custom}

\appendix

\section{Noise Injection in Few-shot Prompts.}
\label{noise}

\textbf{(1) Reasoning-step noise.} Applied to the reasoning chains of exemplars, including:  
(a) random deletion of reasoning steps,  
(b) random permutation of step order,  
(c) random repetition of certain steps.  

\textbf{(2) Semantic noise.}  
For mathematical reasoning tasks (GSM8K, MATH500), we randomly perturb numeric values by small offsets.  
For commonsense and logical reasoning tasks, we reverse logical connectives according to a fixed mapping:
\begin{verbatim}
reversal_map = {
   'therefore': 'however',
   'thus': 'but',
   'so': 'although',
   'because': 'despite',
   'since': 'although',
   'then': 'otherwise',
   'implies': 'contradicts',
   'leads to': 'prevents',
}
\end{verbatim}
This modification preserves fluency but alters semantic validity, making it a strong stress test.  
Noise is injected with equal probability across exemplars unless otherwise specified. By introducing these perturbations, we evaluate not only the efficiency–effectiveness trade-off but also the robustness of efficient reasoning methods under imperfect supervision. 

\section{Proof of Theorem 1}
\begin{proof}[Proof of Theorem~\ref{thm:rho-sensitivity}]
Write the weighted power mean
\begin{equation}
    \begin{split}
        M_\rho(a,b)\;&=\;\bigl(w\,a^\rho+(1-w)\,b^\rho\bigr)^{1/\rho},\\ &a,b>0,\ w\in(0,1),
    \end{split}
\end{equation}
so that $E_3(A,T;\rho)=M_\rho(a,b)$. Let $x_1=\log a$, $x_2=\log b$, $\Delta=\lvert x_1-x_2\rvert$, and define
\begin{equation}
\begin{split}
    S(\rho)&=w\,e^{\rho x_1}+(1-w)\,e^{\rho x_2},\\
    L(\rho)=&\log M_\rho(a,b)=\frac{1}{\rho}\log S(\rho)
\end{split}
\end{equation}
for $\rho\neq 0$; set $M_0(a,b)=a^w b^{\,1-w}$ (the continuous extension at $\rho=0$), whence $L(0)=w x_1+(1-w)x_2$.

Let $f(\rho)=\log S(\rho)$. A direct differentiation gives
\begin{equation}
    \begin{split}
        f'(\rho)=&\frac{w e^{\rho x_1}x_1+(1-w)e^{\rho x_2}x_2}{S(\rho)}=\mathbb{E}_{\pi_\rho}[X],\\
f''(\rho)&=\mathrm{Var}_{\pi_\rho}[X]\ge 0,
    \end{split}
\end{equation}
where $\pi_\rho$ is the softmax distribution on $\{x_1,x_2\}$ with weights proportional to $w e^{\rho x_1}$ and $(1-w)e^{\rho x_2}$. Since $X\in[\min\{x_1,x_2\},\max\{x_1,x_2\}]$ with range $\Delta$, we have the uniform variance bound
\begin{equation}
    0\ \le\ f''(\rho)=\mathrm{Var}_{\pi_\rho}[X]\ \le\ \frac{\Delta^2}{4}\qquad(\forall\,\rho\in\mathbb{R}).
\end{equation}

For $\rho\neq 0$, differentiate $L(\rho)=f(\rho)/\rho$ and set $\phi(\rho)=\rho f'(\rho)-f(\rho)$:
\begin{equation}
    \begin{split}
        L'(\rho)=&\frac{f'(\rho)}{\rho}-\frac{f(\rho)}{\rho^2}=\frac{\phi(\rho)}{\rho^2},\\
&\phi'(\rho)=\rho f''(\rho),\\ \phi(0)=0\ ( &\text{since }f(0)=\log(w+1-w)=0 ).
    \end{split}
\end{equation}

Hence $\phi(\rho)=\int_0^\rho t\,f''(t)\,dt$ and therefore
\begin{equation}
    L'(\rho)=\frac{1}{\rho^2}\int_0^\rho t\,f''(t)\,dt.
\end{equation}
Using $f''(t)\le \Delta^2/4$ and changing variables if $\rho<0$,
\begin{equation}
    \bigl|L'(\rho)\bigr|\ \le\ \frac{1}{\rho^2}\int_0^{|\rho|} t\,\frac{\Delta^2}{4}\,dt
=\frac{\Delta^2}{8}\qquad(\rho\neq 0).
\end{equation}
At $\rho=0$, taking the limit in the integral representation yields
\begin{equation}
        L'(0)=\tfrac12 f''(0)=\tfrac12\,\mathrm{Var}_{\pi_0}[X]\ \le\ \frac{\Delta^2}{8}.
\end{equation}
Thus $\lvert L'(\rho)\rvert\le \Delta^2/8$ for all $\rho\in\mathbb{R}$. By the mean‑value theorem,
\begin{equation}
    \bigl|L(\rho_1)-L(\rho_2)\bigr|\ \le\ \frac{\Delta^2}{8}\,|\rho_1-\rho_2|,
\end{equation}
which is the claimed Lipschitz bound on $\log E_3$. Exponentiating gives the multiplicative bounds. 

\emph{Tightness.} With $w=\tfrac12$ and $x_1=-x_2=\Delta/2$, we have $L'(0)=\Delta^2/8$, so the constant cannot be improved in general.
\end{proof}

\end{document}